\DeclareMathAlphabet{\mathcal}{OMS}{cmsy}{m}{n}
\title{On Explaining Decision Trees}
\author{%
  Yacine Izza\inst{1} \and
  Alexey Ignatiev\inst{2} \and
  Joao Marques-Silva\inst{3} 
}
\institute{%
  ANITI, Univ. Toulouse, France \and
  Monash Univ., Australia \and
  ANITI, IRIT, CNRS, France
}
\authorrunning{Izza, Ignatiev and Marques-Silva} 
\titlerunning{Explanations for Decision Trees} 
\setlist{nolistsep}
\def\thm@space@setup{%
  \thm@preskip=2.25pt
  \thm@postskip=2.25pt}
\renewenvironment{proof}[1][\proofname]{\par
  \pushQED{\qed}%
  \normalfont \partopsep=1.25pt \topsep=1.25pt
  \trivlist
  \item[\hskip\labelsep
        \itshape
    #1\@addpunct{.}]\ignorespaces
}{%
  \popQED\endtrivlist\@endpefalse
}
\renewcommand\paragraph{\@startsection{paragraph}{4}{\z@}%
  {1.75ex \@plus0.5ex \@minus.2ex}%
  {-1em}%
  {\normalfont\normalsize\bfseries}}
\setlist{nosep}
\definecolor{gray}{rgb}{.4,.4,.4}
\definecolor{midgrey}{rgb}{0.5,0.5,0.5}
\definecolor{middarkgrey}{rgb}{0.35,0.35,0.35}
\definecolor{darkgrey}{rgb}{0.3,0.3,0.3}
\definecolor{darkred}{rgb}{0.7,0.1,0.1}
\definecolor{midblue}{rgb}{0.2,0.2,0.7}
\definecolor{darkblue}{rgb}{0.1,0.1,0.5}
\definecolor{darkgreen}{rgb}{0.1,0.5,0.1}
\definecolor{defseagreen}{cmyk}{0.69,0,0.50,0}
\newcommand{\jnoteF}[1]{}
\newcommand{\fml}[1]{{\mathcal{#1}}}
\newcommand{\tn}[1]{\textnormal{#1}}
\newcommand{\mbf}[1]{\ensuremath\mathbf{#1}}
\newcommand{\mbb}[1]{\ensuremath\mathbb{#1}}
\newcommand{\lpr}{(}
\newcommand{\rpr}{)}
\DeclareMathOperator*{\entails}{\vDash}
\DeclareMathOperator*{\limply}{\rightarrow}
\tikzset{
  0 my edge/.style={densely dashed, my edge},
  my edge/.style={-{Stealth[]}},
}
\newcommand{\BotBlankLine}{\vspace*{1.5pt}}
\newcommand*\patchAmsMathEnvironmentForLineno[1]{%
  \expandafter\let\csname old#1\expandafter\endcsname\csname #1\endcsname
  \expandafter\let\csname oldend#1\expandafter\endcsname\csname end#1\endcsname
  \renewenvironment{#1}%
     {\linenomath\csname old#1\endcsname}%
     {\csname oldend#1\endcsname\endlinenomath}}%
\newcommand*\patchBothAmsMathEnvironmentsForLineno[1]{%
  \patchAmsMathEnvironmentForLineno{#1}%
  \patchAmsMathEnvironmentForLineno{#1*}}%
\begin{document}

\maketitle
\setcounter{footnote}{0}

\begin{abstract}
  Decision trees (DTs) epitomize what have become to be known as
  interpretable machine learning (ML) models.
  This is informally motivated by paths in DTs being often much
  smaller than the total number of features.
  This paper shows that in some settings DTs can hardly be deemed
  interpretable, with paths in a DT being arbitrarily larger than a 
  PI-explanation, i.e.\ a subset-minimal set of feature values that
  entails the prediction.
  As a result, the paper proposes a novel model for computing
  PI-explanations of DTs, which enables computing one PI-explanation
  in polynomial time. Moreover, it is shown that enumeration of
  PI-explanations can be reduced to the enumeration of minimal hitting
  sets.
  %
  Experimental results were obtained on a wide range of publicly
  available datasets with well-known DT-learning tools, and confirm
  that in most cases DTs have paths that are proper supersets of
  PI-explanations.
\end{abstract}

\section{Introduction} \label{sec:intro}

Decision trees (DTs) are well known machine learning (ML) models,
studied since at least the
1970s~\cite{garey-sjam72,rivest-ipl76,breiman-bk84,quinlan-ml86,rivest-ml87}.
%
DTs embody what is widely regarded as an interpretable ML
model~\cite{freitas-sigkdd13,guestrin-corr16,muller-dsp18,muller-bk19,molnar-bk19,miller-aij19,pedreschi-acmcs19,zhu-nlpcc19,gombolay-aistats20}\footnote{
  Interpretability is generally accepted to be a subjective concept,
  without a rigorous definition~\cite{lipton-cacm18}. In this paper
  we measure interpretability in terms of the overall succinctness of
  the information provided by a model 
  to explain a given prediction.
  The association of DTs with interpretability can be traced at least
  to Breiman~\cite{breiman-ss01}, who summarizes the interpretability
  of DTs as follows: ``\emph{On interpretability, trees rate an A+}''.
}.
Motivated by this perception, there has been extensive work on
learning DTs (and related logic ML models) with properties deemed
important for interpretability, e.g.\ number of nodes, maximum/average
depth,
etc.~\cite{schaus-ijcai20a,schaus-ijcai20b,hebrard-ijcai20,janota-sat20,schaus-aaai20,avellaneda-aaai20,rudin-naturemi19,holzinger-dmkd19,rudin-nips19,verwer-aaai19,schaus-bnaic19,rudin-mpc18,nipms-ijcai18,ipnms-ijcar18,rudin-aistats18,rudin-jmlr17a,rudin-jmlr17b,leskovec-corr17,rudin-kdd17,rudin-icml17,verwer-cpaior17,bertsimas-ml17,leskovec-kdd16,rudin-icdm16,rudin-aistats15,rudin-aaai13,nijssen-dmkd10,bessiere-cp09,nijssen-kdd07}%
\footnote{The association of DTs with interpretability is also
  illustrated by Interpretable AI 
  (\url{https://www.interpretable.ai/}), which offers interpretability 
  solutions based on optimal decision trees.}.
Moreover, there has been work on \emph{distilling} or approximating
complex ML models with (soft) decision
trees~\cite{hinton-cexaiia17,bastani-corr17a,bastani-corr17b,doshi-velez-aaai18,doshi-velez-corr19,veloso-corr19,doshi-velez-aaai20}. %
Nevertheless, recent work highlights that interpretability should
correlate with how shallow DTs are~\cite{lipton-cacm18,muller-dsp18}.

In contrast with earlier work, this paper investigates the limits of 
interpretability of DTs. Concretely, the paper proposes Boolean
functions for which a minimal DT contains paths with a number of
literals that is arbitrarily larger (growing with the number of
features) than a PI-explanation\footnote{%
  A PI-explanation is a subset-minimal set of feature-value
  pairs that entails the prediction~\cite{darwiche-ijcai18}.} of
constant size.
Experimental results demonstrate that for widely used tools for
constructing DT classifiers, the resulting DTs often contain paths
that are proper supersets of PI-explanations (which we refer as
explanation-redundant paths, in possibly irredundant DTs). Perhaps
more importantly, for a significant number of datasets, and for the
obtained DTs, most of their paths are explanation-redundant. The 
results also indicate that for some DTs, as much as 98\% of feature 
space will be explained by some path that is explanation-redundant.
Motivated by these negative results, we propose a hitting set
formulation for computing PI-explanations of DTs, distinguishing
explanations restricted to literals in a tree path, and explanations 
unrestricted to literals in a tree path. 
In addition, we propose a polynomial time algorithm for computing a
single PI-explanation for any given instance. Finally, the paper
reduces enumeration of PI-explanations of DTs to the problem of
enumerating minimal hitting sets (MHSes), and proposes a solution based
on iterative calls to an NP oracle (e.g.\ SAT or a 0-1 ILP) oracle.
%

%
The paper is structured as follows.~\autoref{sec:prelim}
introduces the notation and definitions used in the remainder of the
paper. \autoref{sec:rdt} studies functions that elicit poor
DT interpretability. \autoref{sec:xdt} proposes a polynomial-time
algorithm for computing a single PI-explanation for a DT.
\autoref{sec:exdt} proposes a solution for enumerating
PI-explanations of DTs, by reducing the problem to the computation of
MHSes.
%
\autoref{sec:res} studies the DTs obtained with two
state-of-the-art tools, on publicly available datasets, and shows that
paths in learned decision trees are often proper supersets of
PI-explanations. Moreover, the experimental results confirm that run
times for extracting PI-explanations are negligible.
%
Finally, \autoref{sec:conc} concludes the paper.

\section{Preliminaries} \label{sec:prelim}

\paragraph{Classification problems.}
We consider a classification problem defined on a set of features
$\fml{F}=\{1,\ldots,n\}$, where each feature $i$ takes values from a 
(categorical) domain $D_i$%
~\footnote{%
  For simplicity, most of the examples in the paper consider
  $D_i\triangleq\{0,1\}$ (i.e.\ binary features).
}, and $n$ denotes the number of features.
Feature space is defined by
$\mbb{F}=D_1\times{D_2}\times\ldots\times{D_n}$, each defining the
range of (categorical) values of each feature $x_i$. To refer to
an arbitrary point in feature space we use the notation
$\mbf{x}=(x_1,\ldots,x_n)$, whereas to refer to a concrete point in
feature space we use the notation $\mbf{v}=(v_1,\ldots,v_n)$, with
$v_i\in{D_i}$, $i=1,\ldots,n$.
We consider a binary classification problem, with two classes
$\fml{K}=\{\ominus,\oplus\}$%
~\footnote{%
  The results in the paper are readily applicable to multiple
  classes; the case $|\fml{K}|=2$ is considered solely for simplicity.}. 
(For simplicity, 
we will often use 0 for $\ominus$ and 1 for $\oplus$.)
An \emph{instance} (or example) denotes a pair $(\mbf{v},\pi)$, where 
$\mbf{v}\in\mbb{F}$ and $\pi\in\fml{K}$.
A machine learning model computes a function $\mu$ that maps the
feature space into the set of classes: $\mu:\mbb{F}\to\fml{K}$.
To train an ML model (in our case we are interested in DTs), we start
from a set of examples $\fml{E}=\{e_1,\ldots,e_m\}$, where each
$e_j=(\mbf{v}_j,\pi_j)$, such that $\mbf{v}_j\in\mbb{F}$  and
$\pi_j\in\fml{K}$, and $m$ is the number of examples. 

\paragraph{Decision trees.}
A decision tree $\fml{T}$ is a directed acyclic graph having at most
one path between every pair of nodes. $\fml{T}$ has a root node, 
characterized by having no incoming edges. All other nodes have one
incoming edge. We consider univariate decision trees (as opposed to
multivariate decision trees~\cite{utgoff-ml95}); %
hence, a non-terminal node is associated with a single feature $x_i$,
and each outgoing edge is associated with one (or more) values from
$D_i$.
Each terminal node is associated with a value of $\fml{K}$. An example
of a decision tree is shown in \autoref{fig:f02}. The number of
nodes in a DT is $r$.
When $|\fml{K}|=2$, a tree is characterized by two sets of paths,
where each path starts at the root and ends at a terminal node.
The set $\fml{P}=\{P_1,\ldots,P_{k_1}\}$ denotes the paths ending in a
$\oplus$ prediction.
The set $\fml{Q}=\{Q_1,\ldots,Q_{k_2}\}$ denotes the paths ending in a
$\ominus$ prediction. We will also use $\fml{R}=\fml{P}\cup\fml{Q}$.
A literal is of the form $x_i\Join{v_i}$, where
$\Join\:\in\{=,\not=\}$%
~\footnote{%
  The ideas described in the paper generalize to univariate DTs where
  features are either categorical or real- or integer-valued ordinal,
  and literals are of the form $x_i\Join{v_i}$, where
  $\Join\:\in\{<,\le,=,\not=,\ge,>\}$.
  The paper's main results can be extended to more general settings,
  but that these  are beyond the scope of the paper.%
}.
$x_i$ is a variable that denotes the value taken by feature $i$,
whereas $v_i\in{D_i}$ is a constant.
To model the operation of some DT learning tools~\cite{utgoff-ml97},
we allow generalized literals of the form $x_i\in{S_i}$, with
$S_i\subsetneq{D_i}$, such that the literal is consistent if the
feature is assigned a value in $S_i$.
Given this generalization, DTs correspond to multi-edge
decision trees~\cite{zeger-tit11}.
Moreover, two literals are inconsistent if they cause the feature to
take values that are inconsistent. For example, the literals $(x_1=0)$
and $(x_1=1)$ are inconsistent.
Each path in $\fml{T}$ is associated with a consistent conjunction of
literals, denoting the values assigned to the features so as to reach
the terminal node in the path. We will represent the set of literals
of some tree path by $\fml{L}(R_k)$, where $R_k$ is either a path in
$\fml{P}$ or $\fml{Q}$. Each path in the tree \emph{entails} the
prediction represented by path's terminal node. Let $\pi$ denote the 
prediction associated with path $R_k$. Then,
\begin{equation} \label{eq:ent01}
\forall(\mbf{x}\in\mbb{F}).
\left[
\bigwedge\nolimits_{(x_i\Join{v_i})\in\fml{L}(R_k)}(x_i\Join{v_i})
\right]
\limply(\mu(\mbf{x})=\pi)
\end{equation}
where $\mbf{x}=(x_1,\ldots,x_i,\ldots,x_n)$ and
$\pi\in\{\ominus,\oplus\}$.
%
Any pair of paths in $\fml{R}$ must have at least one pair
of inconsistent literals.

\paragraph{Interpretability \& DTs.}
Interpretability is generally regarded as a subjective concept,
without a rigorous definition~\cite{lipton-cacm18}, albeit different
authors have proposed different requirements for
interpretability~\cite{lipton-cacm18,flach-aaai19}.
Throughout this paper, we associate interpretability with irreducible
sets of feature-value pairs that are sufficient for the prediction%
\footnote{%
  Clearly, these subsets should be succinct, as it is generally
  accepted that human decision makers are only able to understand
  explanations with a reasonably small number of features.}.
Moreover, as argued in \autoref{sec:intro}, it is generally
accepted that DTs epitomize interpretability.  
Nevertheless, there is recent work that relates DT interpretability
with DTs being shallow~\cite{lipton-cacm18,muller-dsp18}, but also
work that proposes counterfactual explanations for meeting
interpretability desiderata for DTs~\cite{flach-aaai19}.

\paragraph{PI-explanations.}
The paper uses the definition of
PI-explanation~\cite{darwiche-ijcai18}, based on prime implicants of
some decision function.
Let us consider some ML model, computing a classification function
$\mu$ on feature space $\mbb{F}$, a point $\mbf{v}\in\mbb{F}$, with
prediction $\pi=\mu(\mbf{v})$, and let $E$ denote a set of literals
consistent with $\mbf{v}$ (and defined on features variables
$\mbf{x}$).
We say that $E$ is a PI-explanation for $\pi$ given $\mbf{v}$, if the
set of literals $E$ entails the prediction, and any proper subset of 
literals of $E$ does not entail the prediction. Formally, the
following conditions hold:
\begin{subequations}
  \begin{eqnarray}
    \forall(\mbf{x}\in\mbb{F}).
    \left[
    \bigwedge\nolimits_{l_i\in{E}}(l_i)
    \right]
    \limply(\mu(\mbf{x})=\pi)
    \\
    \forall(E'\subsetneq{E})
    \exists(\mbf{x}\in\mbb{F}).
    \left[
    \bigwedge\nolimits_{l_i\in{E'}}(l_i)
    \right]
    \land(\mu(\mbf{x})\not=\pi)
  \end{eqnarray}
\end{subequations}

Given a DT $\fml{T}$ and some path $R_k$, associated with some
prediction $\pi$, we say that path $R_k$ is
\emph{explanation-redundant} (or simply \emph{redundant}) if $R_k$ is
not a PI-explanation of $\pi$ given the ML model $\fml{T}$.
If we associate DT paths with instance explanations, then path
explanation-redundancy will manifest itself in instance explanations.
The concept of explanation-redundancy is illustrated
in~\autoref{ex:f02}.

\begin{figure}[t]
  \begin{subfigure}[b]{0.375\textwidth}
    \begin{center}
      \scalebox{0.9}{\forestset{
  BDT/.style={
    for tree={
      l=1.125cm,s sep=1.0cm,
      if n children=0{}{circle},
      draw,
      edge={
        my edge
      },
      if n=2 {
        edge+={0 my edge},
      }{},
    }
  },
}
\begin{forest}
  BDT
  [$x_1$
    [{\footnotesize\color{darkgreen}1}, edge label={node[near start,left,xshift=-2.75pt] {{\scriptsize1}}}]
    [$x_2$, edge label={node[near start,right,xshift=2.75pt] {{\scriptsize0}}}
     [{\footnotesize\color{darkgreen}1}, edge label={node[near start,left,xshift=-3.5pt] {{\scriptsize{1}}}}]
     [{\footnotesize\color{darkred}0}, edge label={node[near start,right,xshift=3.5pt] {{\scriptsize{0}}}}]
    ]
  ]
\end{forest}}
      \caption{DT for $f(x_1,x_2)={x_1}\lor{x_2}$} \label{fig:f01a}
    \end{center}
  \end{subfigure}
  \begin{subfigure}[b]{0.625\textwidth}
    \begin{center}
      \scalebox{0.9}{\forestset{
  BDT/.style={
    for tree={
      l=1.125cm,s sep=1.0cm,
      if n children=0{}{circle},
      draw,
      edge={
        my edge
      },
      if n=1{
        edge+={0 my edge},
      }{},
    }
  },
}
\begin{forest}
  BDT
  [$x_1$
    [$x_3$, edge label={node[near start,left,xshift=-5pt] {{\scriptsize0}}}
     [{\footnotesize\color{darkred}0}]
     [$x_4$
       [{\footnotesize\color{darkred}0}]
       [{\footnotesize\color{darkgreen}1}]
     ]
    ]
    [$x_2$, edge label={node[near start,right,xshift=5.75pt] {{\scriptsize1}}}
      [$x_3$
        [{\footnotesize\color{darkred}0}]
        [$x_4$
          [{\footnotesize\color{darkred}0}]
          [{\footnotesize\color{darkgreen}1}]
        ]
      ]
      [{\footnotesize\color{darkgreen}1}]
    ]
  ]
\end{forest}}
      \caption{DT for
        $f(x_1,\ldots,x_n)=\bigvee\nolimits_{i=1}^{n/2}x_{2i-1}\land{x_{2i}}$,
        with $n=4$} \label{fig:f01b}
    \end{center}
  \end{subfigure}
  \caption{Example DTs} \label{fig:f01}
\end{figure}
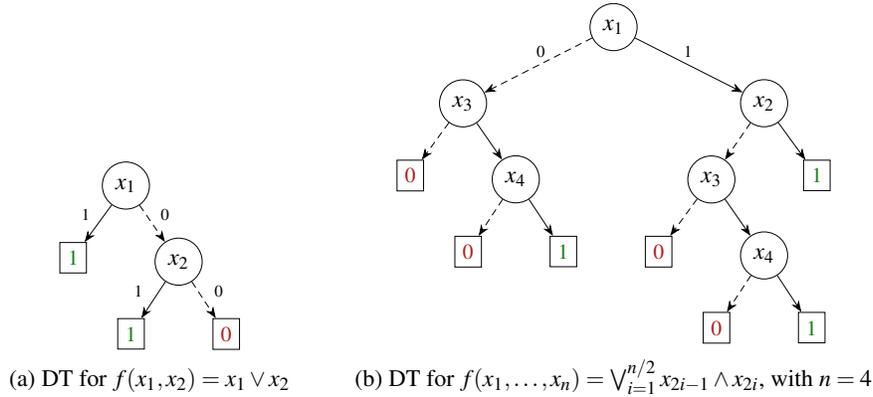

\begin{example} \label{ex:f02}
  Consider the DT in \autoref{fig:f01a}, for function
  $f(x_1,x_2)={x_1}\lor{x_2}$, and instance $(0,1)$. The path
  corresponds to the explanation $\{(x_1,0),(x_2,1)\}$ for prediction
  $f(0,1)=1$.
  Clearly, this path is explanation-redundant, as a PI-explanation for
  prediction 1 is $(x_2,1)$ (as is readily concluded from the function
  definition).\\
  As a less abstract example, we observe that the tree
  in~\autoref{fig:f01a} also models the example DT used
  in~\cite[Ch.~01,page~5]{zhou-bk12}%
  \footnote{This DT is shown in the supplementary material.},
  with $x_1$ denoting \emph{``is $y>0.73$?''}, $x_2$ denoting
  \emph{``is $x>0.64$?''}, class 1 denoting \emph{cross}, and class 0
  denoting \emph{circle}. Hence, a PI-explanation for the instance
  (\emph{no},\emph{yes}) with prediction ``cross'' is \emph{yes} to
  question \emph{``is $x>0.64$?''}, independently of the answer to
  question \emph{``is $y>0.73$?''}.
  \autoref{sec:rdt} investigates explanation-redundancy in greater
  detail.
\end{example}

\paragraph{Related work.}
As referenced in \autoref{sec:intro}, there exists a growing body
of work on exploiting DTs for interpretability.
To our best knowledge, the assessment of paths in DTs when compared to
PI-explanations has not been investigated. Recent
work~\cite{darwiche-corr20} outlines logical encodings of decision
trees, but that is orthogonal to the work reported in this paper.
In addition, there has been work on applying explainable AI (XAI) to
decision trees~\cite{lundberg-nature20}, but with the focus of
improving the quality of local (heuristic) explanations, where the
goal is to relate a local approximate model against a reference model;
hence there is no immediate relationship with
PI-explanations.

\section{Decision Trees May Not be Interpretable}
\label{sec:rdt}

This section shows that there exist Boolean functions for which a
learned decision tree will exhibit paths containing all features, and
for which a PI-explanation has a constant size. Thus, if we associate
explanations with DT paths, there will be explanations that are
arbitraly larger (on $n$) than the actual (constant-size)
PI-explanation.
As the experimental results demonstrate, and as discussed later in
this section, it is fairly frequent in practice for DT paths to
include more literals that those in the associated PI-explanations.
%

\begin{proposition} \label{prop:p01}
  There exist functions for which an irreducible DT contains paths
  which are a proper superset of a PI-explanation. Furthermore, the 
  difference in the number of literals is $n-k$, where $n$ is the
  number of features and $k$ is the (constant) size of a
  PI-explanation.
\end{proposition}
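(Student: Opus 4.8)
The plan is to prove the existence claim constructively by exhibiting a concrete family of Boolean functions, indexed by the number of features $n$, together with an irreducible DT whose deepest path mentions every feature while the corresponding instance admits a PI-explanation of constant size. A clean choice is the $n$-ary disjunction $f(\mbf{x})=\bigvee_{i=1}^{n}x_i$, for which $k=1$; for an arbitrary constant $k$ one may instead take a disjunction of blocks $f(\mbf{x})=\bigvee_{j}\bigl(\bigwedge_{i\in B_j}x_i\bigr)$ with each block $B_j$ of size $k$, generalising the DNF of \autoref{fig:f01b}. I would carry out the argument in detail for the disjunction and indicate the straightforward lifting to blocks at the end.

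First I would fix the DT $\fml{T}$ that tests the features in the order $x_1,\ldots,x_n$ in the usual decision-list fashion: the node for $x_i$ sends the edge $x_i=1$ to a leaf labelled $\oplus$ and the edge $x_i=0$ to the node for $x_{i+1}$; the node for $x_n$ sends $x_n=1$ to a $\oplus$-leaf and $x_n=0$ to a $\ominus$-leaf. By construction $\fml{T}$ computes $f$, and its longest $\oplus$-path is $R_\star$ with $\fml{L}(R_\star)=\{(x_1=0),\ldots,(x_{n-1}=0),(x_n=1)\}$, containing exactly $n$ literals. Taking the instance $\mbf{v}=(0,\ldots,0,1)$, the path reached in $\fml{T}$ is precisely $R_\star$, so the explanation obtained from the path has $n$ feature-value pairs.

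Next I would verify that $\{(x_n=1)\}$ is a PI-explanation for $\pi=\oplus$ given $\mbf{v}$, which amounts to checking the two defining conditions. Entailment holds because $x_n=1$ forces $f(\mbf{x})=\oplus$ regardless of the remaining features, matching the entailment condition of a PI-explanation (cf.\ \eqref{eq:ent01}). Subset-minimality is immediate, since the only proper subset is $\emptyset$, and $\emptyset$ does not entail the prediction (e.g.\ $f(\mbf{0})=\ominus$). Hence $k=1$, and the path $R_\star$ is a proper superset of this PI-explanation with a literal difference of $n-1=n-k$, as required; the block version yields $k=|B_j|$ in the same way.

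The hard part will be arguing that $\fml{T}$ is genuinely \emph{irreducible}, so that the gap cannot be blamed on an artificially bloated tree. I would show that no reduction applies: every leaf of $\fml{T}$ is reached by at least one consistent assignment, so no leaf is dead; the two subtrees hanging off any internal node compute distinct subfunctions (flipping $x_i$ from $1$ to $0$ changes the value on the all-other-zero continuation), so no sibling merge or node contraction preserves $f$; and each tested feature is therefore relevant along its path, so no internal test can be deleted. Establishing this irredundancy rigorously, under whatever minimality notion is adopted for DTs, is the delicate step, since the entire force of the proposition is that the $n-k$ gap persists even for a non-reducible tree.
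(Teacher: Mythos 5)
Your proposal is correct and follows essentially the same strategy as the paper's proof: constructively exhibit a family of DNFs whose (irreducible) DT chains through all $n$ features on some path, while a single term of constant size $k$ already entails the prediction, so the path is a proper superset of a PI-explanation with gap $n-k$. The only differences are cosmetic: the paper uses $\bigvee_{i=1}^{n/2}x_{2i-1}\land x_{2i}$ (so $k=2$) with a tree produced by ITI and asserts irreducibility as immediate, whereas you use the pure $n$-ary disjunction ($k=1$, with the block version recovering the paper's function) and build the caterpillar tree by hand, giving a more explicit justification of irreducibility than the paper does.
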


%
%
\begin{proof}
Let us consider the following Boolean function
$f:\{0,1\}^n\to\{0,1\}$ (with even $n$):
\begin{equation} \label{eq:f01}
  f(x_1,x_2,\ldots,x_{n-1},x_n)=\bigvee\nolimits_{i=1}^{n/2}x_{2i-1}\land{x_{2i}}
\end{equation}
For the case $n=4$, different off-the-self DT learners will obtain the
DT shown in~\autoref{fig:f01b}.
(To obtain the decision tree, we considered a dataset composed of
\emph{all} possible instances, and used
ITI~\cite{utgoff-ml97}\footnote{%
  ITI is available from~\url{https://www-lrn.cs.umass.edu/iti/}. 
We considered a number of publicly available DT learners, and reached
the same conclusions (in terms of path length) in all cases.}.)
%
Furthermore, it is immediate to conclude that the decision tree shown
is irreducible (i.e.\ no nodes can be removed while keeping accuracy).
Moreover, let the target instance be $(\mbf{v},\pi)=((1,0,1,1),1)$.
In this case, the explanation (i.e.\ the path) extracted from the DT
is $(x_1=1)\land(x_2=0)\land(x_3=1)\land(x_4=1)$, which guarantees
that the prediction is 1.
%
However, it is immediate from the function definition \eqref{eq:f01},
that $(x_3=1)\land(x_4=1)$ entails $f(x_1,x_2,x_3,x_4)=1$,
\emph{independently} of the value assigned to $x_1$ and $x_2$,
i.e.\ in this case the PI-explanation is $(x_3=1)\land(x_4=1)$.
The same analysis generalizes to an arbitrary number of variables.
For an instance of the form $((1,0,1,0,\ldots,1,0,1,1),1)$, the DT
would indicate an explanation with $n$ literals, whereas the
PI-explanation has size 2, namely $(x_{n-1}=1)\land(x_n=1)$.
\end{proof}

It should be noted that the issue above does not depend on whether the
DT is redundant (e.g.\ in the cases shown, the DTs are \emph{not}
redundant); the reported issues result solely from a fundamental
limitation of DTs for succinctly representing certain classes of
functions%
\footnote{%
  Indeed, it is well-known that DTs are not as succinct as decision
  lists (DLs)~\cite{rivest-ml87} (and so not as succinct as decision
  sets (DSs)).
  This means that there exist functions that have succint DLs or DSs,
  but not DTs.
  Although not the focus of the paper,
  we conjecture that similar results can be obtained for DLs and for
  restricted cases of DSs, among those where DSs compute functions.
}.
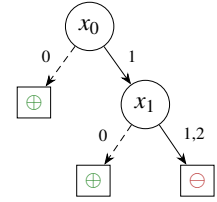
\begin{figure}[t]
  \begin{subfigure}[b]{0.75\textwidth}
    \begin{center}
      \renewcommand{\arraystretch}{1.05}
      \scalebox{0.95}{%
        \begin{tabular}{ccccc} \toprule
          Feature & Name & Domain & Values & Meaning \\ \midrule
          $x_0$   & Humidity & $D_0$ & $\{0,1\}$ & normal,high\\
          $x_1$   & Outlook & $D_1$ & $\{0,1,2\}$ & overcast,rain,sunny\\
          $x_2$   & Wind & $D_2$ & $\{0,1\}$ & strong,weak\\ \bottomrule
        \end{tabular}
      }
    \end{center}
    \caption{Features for \emph{PlayTennis} dataset} \label{fig:f02a}
  \end{subfigure}
  \begin{subfigure}[b]{0.2125\textwidth}
    \begin{center}
      \scalebox{0.925}{\forestset{
  BDT/.style={
    for tree={
      l=1.125cm,s sep=1.0cm,
      if n children=0{}{circle},
      draw,
      edge={
        my edge
      },
      if n=1{
        edge+={0 my edge},
      }{},
    }
  },
}
\begin{forest}
  BDT
  [$x_0$
    [{\footnotesize\color{darkgreen}$\oplus$}, edge label={node[near start,left,xshift=-3.5pt] {{\scriptsize{0}}}}]
    [$x_1$, edge label={node[near start,right,xshift=3.5pt] {{\scriptsize{1}}}}
      [{\footnotesize\color{darkgreen}$\oplus$}, edge label={node[near start,left,xshift=-3.5pt] {{\scriptsize{0}}}}]
      [{\footnotesize\color{darkred}$\ominus$}, edge label={node[near start,right,xshift=3.5pt] {{\scriptsize{1,2}}}}]
    ]
  ]
\end{forest}
    \end{center}
    \caption{Resulting DT} \label{fig:f02b}
  \end{subfigure}
  \caption{Another Example DT} \label{fig:f02}
\end{figure}
\autoref{fig:f02} exemplifies that redundancy may occur even in very
simple DTs.
%
The DT was obtained with the optimal decision tree package from %
Interpretable~AI%
~\cite{bertsimas-ml17,iai}\footnote{%
  We used the well-known \emph{PlayTennis} dataset%
  ~\cite{mitchell-bk97}.}, where the $\ominus$ and the leftmost
$\oplus$ are predicted with 75\% and 83.3\% confidence, respectively.
(The branch annotated with $1,2$ denotes that $x_1\in\{1,2\}$.)
%
Let the instance be
$(\tn{Humidity},\tn{Outlook},\tn{Wind})=(\tn{high},\tn{overcast},\tn{weak})$. As
an explanation we could use the literals in the tree path,
i.e.\ $\{(\tn{Humidity}=\tn{high}),(\tn{Outlook}=\tn{overcast})\}$.
However, careful analysis allows us to conclude that
$\{(\tn{Outlook}=\tn{overcast})\}$ suffices to entail the prediction
$\oplus$, i.e.\ as long as `Outlook' is `overcast', the prediction
will be $\oplus$ \emph{independently} of the value of `Humidity'.

\section{Extracting PI-Explanations from DTs} \label{sec:xdt}

\paragraph{Deciding explanation-redundancy with NP oracles.}
%
%
Let us consider a decision tree $\fml{T}$, with sets of paths
$\fml{P}$ and $\fml{Q}$, denoting respectively the paths with
prediction $\oplus$ and $\ominus$. Let us also consider an instance
$(\mbf{v},\oplus)$, with $\mbf{v}\in\mbb{F}$ and $\oplus\in\fml{K}$
(the case for $\ominus$ would be similar), and let $P_k$ denote that
path consistent with $\mbf{v}$.
To decide whether $P_k$ exhibits explanation-redundancy, one possible
solution is to use an NP oracle.

For an instance consistent with $P_k$, we can model the prediction of
the decision tree (for prediction $\oplus$) as follows:
\begin{equation} \label{eq:entail01}
  \bigwedge\nolimits_{l_j\in{\fml{L}(P_k)}}(l_j)\entails\bigvee\nolimits_{P_i\in\fml{P}}\bigwedge\nolimits_{l_s\in{\fml{L}(P_i)}}(l_s)
\end{equation}
Hence, we require the unsatisfiability of,
\begin{equation} \label{eq:entail02}
  \bigwedge\nolimits_{l_j\in{\fml{L}(P_k)}}(l_j)\land\bigwedge\nolimits_{P_i\in\fml{P}}\bigvee\nolimits_{l_s\in{\fml{L}(P_i)}}(\neg{l_s})
\end{equation}
Now, if there exists a literal $l_j$ that can be dropped from $P_k$
such that unsatisfiability is preserved, then $P_k$ exhibits
explanation-redundancy. Hence, we need at most $m$ calls to an NP
oracle to decide explanation-redundancy, where $m$ is the number of
features. (This high-level procedure was proposed in earlier work in
more general terms~\cite{inms-aaai19}.)
However, the special structure of a DT, makes the problem far
simpler, and can be solved in polynomial time, as shown next.

\paragraph{Deciding explanation-redundancy in linear time.}
%
Observe that~\eqref{eq:entail01} is preserved iff 
at least one of the features with a literal in $P_k$ has another
literal that is false along \emph{any} path that yields prediction
$\ominus$.
Thus, there is explanation-redundancy if there exists a literal from
$P_k$ that can be dropped while the remaining literals in $P_k$ still
guarantee that at least one literal is false along any path in
$\fml{Q}$.
Before detailing a polynomial time algorithm for deciding
explanation-redundancy, let us consider a concrete example.

\begin{example} \label{ex:02}
  For the DT from \autoref{fig:f01b} we have,
  \[
  \begin{array}{l}
    \fml{P}=\{P_1,P_2,P_3\} \\[1.5pt]  
    \fml{L}(P_1)=\{(x_1=0),(x_3=1),(x_4=1)\} \\
    \fml{L}(P_2)=\{(x_1=1),(x_2=0),(x_3=1),(x_4=1)\} \\
    \fml{L}(P_3)=\{(x_1=1),(x_2=1)\} \\[3.5pt]
    \fml{Q}=\{Q_1,Q_2,Q_3,Q_4\} \\[1.5pt]  
    \fml{L}(Q_1)=\{(x_1=0),(x_3=0)\} \\
    \fml{L}(Q_2)=\{(x_1=0),(x_3=1),(x_4=0)\} \\
    \fml{L}(Q_3)=\{(x_1=1),(x_2=0),(x_3=0)\} \\
    \fml{L}(Q_4)=\{(x_1=1),(x_2=0),(x_3=1),(x_4=0)\} \\
  \end{array}
  \]
  We consider path $P_2$ (and so any feature space point consistent
  with $P_2$).
  We can readily conclude that if literal $(x_1=1)$ is removed from
  the literals of $P_2$, all the paths in $\fml{Q}$ remain
  inconsistent. This is true for example  
  because $(x_3=1)$ and $(x_4=1)$. Similarly, we could drop literal
  $(x_2=0)$.
\end{example}

The example above naturally suggests a quadratic time (on the size $n$
of the decision tree) algorithm to decide whether a tree path exhibits 
explanation-redundancy. Concretely, for each literal in $P_k$, we
analyze each path in $\fml{Q}$ whether it is still inconsistent. If
all paths in $\fml{Q}$ remain inconsistent, then the literal can be
dropped, and the path exhibits explanation-redundancy.
Nevertheless, it is possible to devise a more efficient solution, one
that runs in linear time.

The proposed algorithm analyzes the features containing literals in
$P_i$\footnote{%
  As before, we assume a path $P_i$ with prediction $\oplus$.
},
in turn allowing each to be declared \emph{universal},
i.e.\ the feature can take any value. For each feature $f_j$, the
algorithm recursively analyzes paths with a different prediction,
checking whether each such path is inconsistent, due to some other
literal. If that is the case, the path $P_i$ is explanation-redundant,
at least due to feature $f_j$.
The algorithm analyzes the internal nodes of $P_i$ in reverse order,
starting from deepest non-terminal node in the path. (For now, we
assume that $P_i$ has at most one literal on any given feature; this
restriction will be lifted below.)
For each non-terminal node $p_j\in{P}_i$, the associated feature $f_j$
is made universal, i.e.\ it can take 
\emph{any} value. Starting at $p_j$, all child nodes not in $P_i$ are
recursively visited, checking for a consistent sub-path (starting at
$p_j$) to a terminal node with prediction $\ominus$. If such path
exists, then $f_j$ cannot be made universal, and so it cannot be
discarded from a PI-explanation. Otherwise, all sub-paths to
prediction $\ominus$ are inconsistent, and so the value of $f_j$ is
irrelevant for the prediction. A path is declared redundant iff at
least one feature is declared redundant. With filtering
(i.e.\ $\mathtt{rec}=0$), each tree node is analyzed at most once, and
so the amortized run time of the algorithm over all features is
$\fml{O}(|\fml{T}|)$. 
If a feature is tested more than once along $P_i$, the algorithm
requires minor modifications. In this case, a decision about whether
feature $f_i$ can take any value, can only be made once all nodes
involving $f_i$ have been analyzed and, for all such nodes, the
feature $f_i$ has been declared redundant.
\begin{algorithm}[t]
%
\SetKwFunction{ChkPaths}{\small{\sc DecidePathRedundancy}}
\SetKwFunction{ChkDown}{\small{\sc ChkDown}}
\SetKwFunction{AggrFeat}{AggrFeatureNodes}
\SetKwFunction{PathFeats}{PathFeatures}
\SetKwFunction{PathNodes}{PathNodes}
\SetKwFunction{SetUniv}{SetUniversal}
\SetKwFunction{UnsetUniv}{UnsetUniversal}
\SetKwFunction{ReportPath}{ReportPath}

\Func \ChkPaths{$\fml{T}$} \\ 
\Indp
{
  \lnlset{d1:1}{1}
  \ForEach{$R_k\in\fml{T}$}{
    \lnlset{d1:2}{2}
    $\fml{A}\gets\AggrFeat(\fml{T},R_k)$ \;
    \lnlset{d1:3}{3}
    $\fml{N}\gets\PathNodes(\fml{T},R_k)$ \;
    \lnlset{d1:4}{4}
    $\tn{isPathRed}\gets\false$ \;
    \lnlset{d1:5}{5}
    \ForEach{$f\in\PathFeats(\fml{T},R_k)$}{
      \lnlset{d1:6}{6}
      $\tn{isFeatRed}\gets\true$ \;
      \lnlset{d1:7}{7}
      $\SetUniv(\fml{T},f)$ \;
      \lnlset{d1:8}{8}
      \ForEach{$n\in\fml{A}(f)$}{
        \lnlset{d1:9}{9}
        \If{\KwNot $\ChkDown(\fml{T},\fml{N},R_k,n,0)$}{
          \lnlset{d1:10}{10}
          $\tn{isFeatRed}\gets\false$ \;
          \lnlset{d1:11}{11}
          \Break \;
        }
      }
      \lnlset{d1:12}{12}
      $\UnsetUniv(\fml{T},f)$ \;
      \lnlset{d1:13}{13}
      \If{$\tn{isFeatRed}$}{
        \lnlset{d1:14}{14}
        $\tn{isPathRed}\gets\true$ \;
        \lnlset{d1:15}{15}
        \Break \;
      }
    }
    \lnlset{d1:16}{16}
    $\ReportPath(R_k,\tn{isPathRed})$ \;
  }
}
\Indm
\BotBlankLine
%

  \caption{Deciding path redundancy} \label{alg:pathred}
\end{algorithm}
\begin{algorithm}[t]
%
\SetKwFunction{ChkDown}{\small{\sc ChkDown}} %
\SetKwFunction{CDCall}{ChkDown} %
\SetKwFunction{ChildNodes}{ChildNodes}
\SetKwFunction{Term}{IsTerminal}
\SetKwFunction{Pred}{Prediction}
\SetKwFunction{HasP}{HasPaths}
\SetKwFunction{Univ}{Universal}
\SetKwFunction{Feat}{Feature}

\Func \ChkDown{$\fml{T},\fml{N},R_k,n,\mathtt{rec}$} \\ 
\Indp
{
  \lnlset{c1:1}{1}
  $\pi\gets\Pred(\fml{T},R_k)$ \;
  \lnlset{c1:2}{2}
  $\fml{C}\gets\ChildNodes(\fml{T},n)$ \;
  \lnlset{c1:3}{3}
  \ForEach{$c\in\fml{C}$}{
    \lnlset{c1:4}{4}
    \lIf{$c\in\fml{N}$ \KwAnd $\mathtt{rec}=0$}{\Cont}
    \lnlset{c1:5}{5}
    \lIf{\KwNot $\HasP(\fml{T}, c, \pi)$}{\Cont}
    \lnlset{c1:6}{6}
    \If{$\Term(\fml{T},c)$}{
      \lnlset{c1:7}{7}
      \Return{\false} \;
    }
    \lnlset{c1:8}{8}
    $g\gets\Feat(\fml{T},c)$ \;
    \lnlset{c1:9}{9}
    \lIf{\KwNot $\Univ(\fml{T},g)$}{\Cont}
    \lnlset{c1:10}{10}
    \If{\KwNot $\ChkDown(\fml{T},\fml{N},R_k,c,\mathtt{rec})$} {
      \lnlset{c1:11}{11}
      \Return{\false} \;
    }
  }
  \lnlset{c1:12}{12}
  \Return{\true}\;
}
\Indm
\BotBlankLine
%

  \caption{Inconsistent sub-path lookup} \label{alg:chkdown}
\end{algorithm}
Algorithms~\ref{alg:pathred} and~\ref{alg:chkdown} summarize the two
main steps of the proposed algorithm. The auxiliar functions serve to
test/set whether a feature is universal (i.e.\ whether it can take any
value of its domain) (resp.~\Univ/\SetUniv), aggregate the nodes
associated with a given feature along some path (\AggrFeat), list the
nodes in a given path (\PathNodes), get the  prediction of some path
(\Pred), list the child nodes of some node in the tree (\ChildNodes),
get the feature associated with a node (\Feat), check whether some
node is terminal (\Term) and, finally, whether the sub-paths starting
at some node can reach a prediction other than the one associated with
the target path $R_k$ (\HasP).
Finally, the argument $\mathtt{rec}$ of $\ChkDown$ is a flag that
serves to avoid re-visiting already visited paths. For removing
redundant features this filtering does not apply, as clarified below.

\begin{example}
  We consider again the DT from \autoref{fig:f01b} and path
  $P_2$, where $\fml{L}(P_2)=\{(x_1=1),(x_2=0),(x_3=1),(x_4=1)\}$, and
  prediction $1$. The nodes of $P_2$ are analyzed in the order
  $\langle{x_4},{x_3},{x_2},{x_1}\rangle$.
  Clearly, the sub-path consistent with $(x_4=0)$ yields prediction
  $0$. Hence, feature $x_4$ (associated with literal $x_4=1$) is not
  redundant.
  For $x_3$, we consider (\emph{only}) the sub-path corresponding to
  $(x_3=0)$. Again, the prediction is $0$, and so the feature $x_3$ is
  not redundant. For feature $x_2$, the only sub-path corresponds to
  $(x_2=1)$, for which the prediction remains unchanged. Hence, $x_2$
  can take \emph{any} value, and so it is declared redundant. As a
  result, $P_2$ is also declared redundant.
  It is helpful to analyze the execution of the algorithm for $x_1$.
  The sub-paths consistent with $(x_1=1)$ correspond to $P_1$, $Q_1$
  and $Q_2$.
  Hence, due to $Q_1$ and $Q_2$, it might seem that $x_1$ might be
  irredundant. However, both $Q_1$ and $Q_2$ are inconsistent with
  other non-redundant literals of $P_2$, concretely $x_3=1$ and
  $x_4=1$. Hence, $x_1$ is declared redundant.
\end{example}

\paragraph{Extracting one PI-explanation.}
%
%
One approach to find a PI-explanation is to use recent work based on
compilation~\cite{darwiche-ijcai18} or iterative entailment
checks with an NP oracle~\cite{inms-aaai19}.
However, a computationally simpler solution is based on the ideas
described above. 

Features are analyzed as proposed in Algorithms~\ref{alg:pathred}
and~\ref{alg:chkdown}. However, a feature already declared as
redundant signifies that it can take \emph{any} value from its
domain. This may allow inconsistent paths with prediction $\ominus$ to
become consistent, thus preventing some other feature from being
declared redundant.
One consequence is that the filtering of paths exploited in
Algorithms~\ref{alg:pathred} and~\ref{alg:chkdown} is not longer
applicable. Concretely, the analysis of each feature requires visiting
all of the sub-paths starting in any of the path nodes testing the
feature. Algorithm~\ref{alg:chkdown} can still be used, in this case
by setting $\mathtt{rec}$ to 1, i.e.\ all sub-paths will be analyzed.
The worst-case complexity of the resulting algorithm is
$\fml{O}(|\fml{T}|)$ for each feature, thus yielding
$\fml{O}(|\fml{T}|^2)$ for the complete algorithm. Finally, we can
conclude that an algorithm for finding one PI-explanation for each
path in $\fml{T}$ runs in $\fml{O}(|\fml{T}|^3)$.

\begin{example}
  We consider the DT from \autoref{fig:f01b} and analyze path
  $P_2$ (and so \emph{any} instance consistent with $P_2$). Literals
  are analyzed in reverse path order.
  (In this case, aggregation of nodes by feature is optional, as long
  as a decision with respect to a feature is delayed until all nodes
  associated with the feature have been analyzed, and keeping track
  whether non-irredundancy applies to all nodes.)
  As before, the literal $(x_4=1)$ is not redundant; otherwise $Q_4$
  would be consistent. Similarly, the literal $(x_3=1)$ is not
  redundant; otherwise $Q_3$ would be consistent.
  In contrast, the feature $x_2$ can be made universal, as this does
  not change the prediction, i.e.\ $P_2$ is consistent with the
  prediction, and the other literals $(x_3=1)$ and $(x_4=1)$ block the
  paths in $\fml{Q}$.
  A similar analysis applies in the case of feature $x_1$. In this
  case, the algorithm analyzes all paths (since 
  Algorithm~\ref{alg:chkdown} is invoked with $\mathtt{rec}=1$).
  Due to the literals $(x_3=1)$ and $(x_4=1)$, all paths in $\fml{Q}$
  are inconsistent. As a result, feature $x_1$ can be declared
  redundant,
  and a PI-explanation for $P_2$ is thus $\{(x_3=1),(x_4=1)\}$.
\end{example}

\paragraph{Path-restricted vs.\ path-unrestricted explanations.}
%
%
For the algorithms described earlier
we also need to decide the set of literals to consider. 
One option is the set of literals specified by the instance.
Another option is the set of literals specific to the tree path
consistent with the instance.
If we are interested in finding PI-explanations for the prediction,
given the instance, then we should consider the literals specified by
the instance. However, if we want to report explanations that relate
with the tree path consistent with the instance, then we should
consider only the literals in the tree path.
Clearly, path-restricted PI-explanations are a subset of
path-unrestricted PI-explanations.
The following example illustrates the differences between the two
approaches.

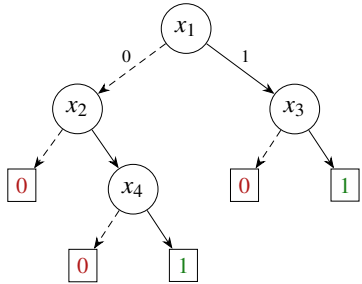
\begin{figure}[t]
  \centering \scalebox{0.95}{\forestset{
  BDT/.style={
    for tree={
      l=1.125cm,s sep=1.0cm,
      if n children=0{}{circle},
      draw,
      edge={
        my edge
      },
      if n=1{
        edge+={0 my edge},
      }{},
    }
  },
}
\begin{forest}
  BDT
  [$x_1$
    [$x_2$, edge label={node[near start,left,xshift=-3.5pt] {{\scriptsize0}}}
      [{\footnotesize\color{darkred}0}]
      [$x_4$
        [{\footnotesize\color{darkred}0}]
        [{\footnotesize\color{darkgreen}1}]
      ]
    ]
    [$x_3$, edge label={node[near start,right,xshift=3.5pt] {{\scriptsize1}}}
     [{\footnotesize\color{darkred}0}]
     [{\footnotesize\color{darkgreen}1}]
    ]
  ]
\end{forest}}
  \caption{Path-restricted vs.\ path-unrestricted explanations}
  \label{fig:f03} 
\end{figure}

\begin{example} \label{ex:f03}
  We consider the example DT shown in~\autoref{fig:f03}.
  Given the point $(x_1,x_2,x_3,\linebreak[1]x_4)=(1,1,1,1)$, the consistent path
  in the DT consists of the literals $\{(x_1=1),(x_3=1)\}$.
  The only PI-explanation that is restricted to this path is the path
  itself: $\{(x_1=1),(x_3=1)\}$. However, if we enable
  path-unrestricted explanations, we will also obtain the
  PI-explanation $\{(x_2=1),(x_3=1),(x_4=1)\}$, which is not even
  shown as (part of) a path in the decision tree.
\end{example}

\section{Enumeration of PI-Explanations} \label{sec:exdt} 

The enumeration of multiple (or all) PI-explanations can help
human decision makers to develop a better understanding of some
prediction, but also of the underlying ML model.
%
Recent work~\cite{darwiche-ijcai18} compiles a decision function into a
Sentential Decision Diagram (SDD), from which the enumeration of
PI-explanations can be instrumented. Moreover, from a compiled
representation of the PI-explanations, each PI-explanation can be
reported in polynomial time. The downside is that these representation
are worst-case exponential in the size of the original ML model.
Another line of work for computing PI-explanations is based on
iterative entailment checks using an NP-oracle~\cite{inms-aaai19}.
However, this recent work does not address the enumeration of
PI-explanations.
This section develops a solution for the enumeration of
PI-explanations in the case of DTs, by reduction to the enumeration of
minimal hitting sets (MHSes).

We consider the situation where the prediction is $\oplus$ for some
point $\mbf{v}$ in feature space, which is consistent with some tree
path $P_k\in\fml{P}$. As a result, each path in $\fml{Q}$ is
inconsistent with at least one literal, among those either associated
with $\mbf{v}$ or with $P_k$. Let the set of literals considered be
$R$.
For each path $Q_s\in\fml{Q}$, let $L_s$ denote the
set of literals in $R$ that are inconsistent with $Q_s$. For a subset
$S$ of $R$ to entail the prediction, it must hit each set of literals
$L_s$. Among the possible sets $S$, each subset-minimal set is a
PI-explanation.
Thus, we can list the PI-explanations (starting from the literals
taken from $\mbf{v}$ or from $P_k$) by enumerating minimal hitting
sets.

\begin{example}
  For $P_2$ in the DT of Figure~\ref{fig:f01}, the sets to hit are:
  \[
  \begin{array}{lcl}
    Q_1:~\{(x_1=1),(x_3=1)\} & \quad &
    Q_2:~\{(x_1=1),(x_4=1)\} \\[1.25pt]
    Q_3:~\{(x_3=1)\} & \quad &
    Q_4:~\{(x_4=1)\}\\
  \end{array}
  \]
  In this case, the only MHS is $\{(x_3=1),(x_4=1)\}$, representing a
  single PI-explanation $\{(x_3=1),(x_4=1)\}$.
\end{example}

\begin{example}
  For the DT in Figure~\ref{fig:f03}, and by considering the
  literals from the point $\mbf{v}=(1,1,1,1)$, 
  the sets to hit are then:
  \[
  \begin{array}{lcl}
    Q_1:~\{(x_1=1),(x_2=1)\} & \quad & 
    Q_2:~\{(x_1=1),(x_4=1)\} \\[1.25pt]
    Q_3:~\{(x_3=1)\} & \quad & \\
  \end{array}
  \]
  The MHSes are $\{(x_1=1),(x_3=1)\}$ and
  $\{(x_2=1),(x_4=1),(x_3=1)\}$, each denoting a path-unrestricted
  PI-explanation.
\end{example}

\section{Experimental Results} \label{sec:res}

\setlength{\tabcolsep}{5pt}
\rowcolors{2}{gray!10}{}
\begin{table*}[h]\centering

\resizebox{\textwidth}{!}{
  \begin{tabular}{l>{\lpr}S[table-format=3.0,table-space-text-pre=\lpr]S[table-format=5.0,table-space-text-post=\rpr]<{\rpr}cS[table-format=3.0]S[table-format=3.0]S[table-format=2.0]S[table-format=2.0]S[table-format=2.0]cccS[table-format=2.0]S[table-format=4.0]S[table-format=3.0]S[table-format=4.0]S[table-format=2.0]S[table-format=2.0]S[table-format=2.0]cS[table-format=2.0]}
\toprule[1.2pt]
\rowcolor{white}
\multirow{2}{*}{\bf Dataset} & \multicolumn{2}{c}{\multirow{2}{*}{\bf (\#F\hspace{0.5cm} \#S)}} & \multicolumn{9}{c}{\bf IAI} & \multicolumn{9}{c}{\bf ITI} \\
  \cmidrule[0.8pt](lr{.75em}){4-12}
  \cmidrule[0.8pt](lr{.75em}){13-21}
\rowcolor{white}
& \multicolumn{2}{c}{} & {\bf D} & {\bf \#N} & {\bf \%A} & {\bf \#P} & {\bf \%R} & {\bf \%C} & {\bf \%m} & {\bf \%M} & {\bf \%avg} & {\bf D} & {\bf \#N} & {\bf \%A} & {\bf \#P} & {\bf \%R} & {\bf \%C} & {\bf \%m} & {\bf \%M} & {\bf \%avg} \\
\toprule[1.2pt]

adult & 12 & 6061 & 6 & 83 & 78 & 42 & 33 & 25 & 20 & 40 & 25 & 17 & 509 &  73 & 255 & 75 & 91 & 10 & 66 & 22  \\

anneal & 38 & 886 & 6 & 29 & 99 & 15 & 26 & 16 & 16 & 33 & 21 & 9 & 31 &  100 & 16 & 25 & 4 & 12 & 20 & 16  \\

backache & 32 & 180 & 4 & 17 & 72 & 9 & 33 & 39 & 25 & 33 & 30 & 3 & 9 &  91 & 5 & 80 & 87 & 50 & 66 & 54  \\ 

bank & 19 & 36293 & 6 & 113 & 88 & 57 & 5 & 12 & 16 & 20 & 18 & 19 & 1467 &  86 & 734 & 69 & 64 & 7 & 63 & 27  \\

biodegradation & 41 & 1052 & 5 & 19 & 65 & 10 & 30 & 1 & 25 & 50 & 33 & 8 & 71 &  76 & 36 & 50 & 8 & 14 & 40 & 21  \\

cancer & 9 & 449 & 6 & 37 & 87 & 19 & 36 & 9 & 20 & 25 & 21 & 5 & 21 &  84 & 11 & 54 & 10 & 25 & 50 & 37  \\

car & 6 & 1728 & 6 & 43 & 96 & 22 & 86 & 89 & 20 & 80 & 45 & 11 & 57 &  98 & 29 & 65 & 41 & 16 & 50 & 30  \\

colic & 22 & 357 & 6 & 55 & 81 & 28 & 46 & 6 & 16 & 33 & 20 & 4 & 17 &  80 & 9 & 33 & 27 & 25 & 25 & 25  \\

compas & 11 & 1155 & 6 & 77 & 34 & 39 & 17 & 8 & 16 & 20 & 17 & 15 & 183 &  37 & 92 & 66 & 43 & 12 & 60 & 27  \\

contraceptive & 9 & 1425 & 6 & 99 & 49 & 50 & 8 & 2 & 20 & 60 & 37 & 17 & 385 &  48 & 193 & 27 & 32 & 12 & 66 & 21  \\

dermatology & 34 & 366 & 6 & 33 & 90 & 17 & 23 & 3 & 16 & 33 & 21 & 7 & 17 &  95 & 9 & 22 & 0 & 14 & 20 & 17  \\ 

divorce & 54 & 150 & 5 & 15 & 90 & 8 & 50 & 19 & 20 & 33 & 24 & 2 & 5 &  96 & 3 & 33 & 16 & 50 & 50 & 50  \\

german & 21 & 1000 & 6 & 25 & 61 & 13 & 38 & 10 & 20 & 40 & 29 & 10 & 99 &  72 & 50 & 46 & 13 & 12 & 40 & 22  \\

heart-c & 13 & 302 & 6 & 43 & 65 & 22 & 36 & 18 & 20 & 33 & 22 & 4 & 15 &  75 & 8 & 87 & 81 & 25 & 50 & 34  \\

heart-h & 13 & 293 & 6 & 37 & 59 & 19 & 31 & 4 & 20 & 40 & 24 & 8 & 25 &  77 & 13 & 61 & 60 & 20 & 50 & 32  \\

kr-vs-kp & 36 & 3196 & 6 & 49 & 96 & 25 & 80 & 75 & 16 & 60 & 33 & 13 & 67 &  99 & 34 & 79 & 43 & 7 & 70 & 35  \\

lending & 9 & 5082 & 6 & 45 & 73 & 23 & 73 & 80 & 16 & 50 & 25 & 14 & 507 &  65 & 254 & 69 & 80 & 12 & 75 & 25  \\

letter & 16 & 18668 & 6 & 127 & 58 & 64 & 1 & 0 & 20 & 20 & 20 & 46 & 4857 &  68 & 2429 & 6 & 7 & 6 & 25 & 9  \\

lymphography & 18 & 148 & 6 & 61 & 76 & 31 & 35 & 25 & 16 & 33 & 21 & 6 & 21 &  86 & 11 & 9 & 0 & 16 & 16 & 16  \\

mortality & 118 & 13442 & 6 & 111 & 74 & 56 & 8 & 14 & 16 & 20 & 17 & 26 & 865 &  76 & 433 & 61 & 61 & 7 & 54 & 19  \\ 

mushroom & 22 & 8124 & 6 & 39 & 100 & 20 & 80 & 44 & 16 & 33 & 24 & 5 & 23 &  100 & 12 & 50 & 31 & 20 & 40 & 25  \\

pendigits & 16 & 10992 & 6 & 121 & 88 & 61 & 0 & 0 & \textemdash & \textemdash & \textemdash & 38 & 937 &  85 & 469 & 25 & 86 & 6 & 25 & 11  \\

promoters & 58 & 106 & 1 & 3 & 90 & 2 & 0 & 0 & \textemdash & \textemdash & \textemdash & 3 & 9 &  81 & 5 & 20 & 14 & 33 & 33 & 33  \\

recidivism & 15 & 3998 & 6 & 105 & 61 & 53 & 28 & 22 & 16 & 33 & 18 & 15 & 611 &  51 & 306 & 53 & 38 & 9 & 44 & 16  \\

seismic\_bumps & 18 & 2578 & 6 & 37 & 89 & 19 & 42 & 19 & 20 & 33 & 24 & 8 & 39 &  93 & 20 & 60 & 79 & 20 & 60 & 42  \\

shuttle & 9 & 58000 & 6 & 63 & 99 & 32 & 28 & 7 & 20 & 33 & 23 & 23 & 159 &  99 & 80 & 33 & 9 & 14 & 50 & 30  \\

soybean & 35 & 623 & 6 & 63 & 88 & 32 & 9 & 5 & 25 & 25 & 25 & 16 & 71 &  89 & 36 & 22 & 1 & 9 & 12 & 10  \\


spambase & 57 & 4210 & 6 & 63 & 75 & 32 & 37 & 12 & 16 & 33 & 19 & 15 & 143 &  91 & 72 & 76 & 98 & 7 & 58 & 25  \\

spect & 22 & 228 & 6 & 45 & 82 & 23 & 60 & 51 & 20 & 50 & 35 & 6 & 15 &  86 & 8 & 87 & 98 & 50 & 83 & 65  \\

splice & 2 & 3178 & 3 & 7 & 50 & 4 & 0 & 0 & \textemdash & \textemdash & \textemdash & 88 & 177 &  55 & 89 & 0 & 0 & \textemdash & \textemdash & \textemdash  \\


\bottomrule[1.2pt]
\end{tabular}
}
\caption{Explanation-redundancy  in decision trees obtained with IAI and ITI. \label{tab:res}}
\end{table*}

This section presents a summary of experimental evaluation of the
explanation-redundancy of two state-of-the-art heuristic DT
classifiers.
Concretely, we use the well-known DT training tools \emph{ITI}
(\emph{Incremental Tree Induction})~\cite{utgoff-ml97,iti} and
\emph{IAI} (\emph{Interpretable AI})~\cite{bertsimas-ml17,iai}.
ITI is run with the pruning option enabled, which helps avoiding
overfitting and aims at constructing shallow DTs.
To enforce IAI to produce shallow DTs and achieve high accuracy, it is
set to use the optimal tree classifier method with the maximal depth
of 6%
\footnote{Our results confirm that larger maximal depths would in most
cases increase the percentage of redundant paths. A smaller maximal
depth would not improve accuracy.}.
The experiments consider datasets with categorical (non-binarized)
data, which both ITI and IAI can handle%
%
\footnote{%
Other known DT learning tools, including
scikit-learn~\cite{scikitlearn-full} and
DL8.5~\cite{schaus-aaai20,schaus-ijcai20a} can only handle numerical 
and binary features, respectively, and so could not be included in the
experiments.}.
The assessment is performed on a selection of 80 publicly
available datasets, which originate from \emph{UCI Machine Learning
  Repository}~\cite{uci}, \emph{Penn Machine Learning
  Benchmarks}~\cite{pennml}, and \emph{OpenML
repository}~\cite{openml}.
(Due to space restrictions, we report the results only for 30 datasets
but the results shown extend to the complete benchmark set, and are
included in the supplementary materials.)
The number of features (data instances, resp.) in the benchmark suite
vary from 2 to 118 (106 to 58000, resp.) with the average being 31.2
(6045.3, resp.).


The experiments are performed on a MacBook Pro with a Dual-Core Intel
Core~i5 2.3GHz CPU with 8GByte RAM running macOS Catalina.
The polynomial-time explanation-redundancy check and a single
PI-explanation extraction proposed in Section~\ref{sec:xdt} are
implemented in Perl. (An implementation using the Glucose SAT solver
was instrumental in validating the results, but for the DTs
considered, it was in general slower by at least one order of
magnitude.)
Performance-wise, training DTs with IAI takes from 4sec to 2310sec
with the average run time per dataset being 70sec.
In contrast, the time spent on eliminating explanation-redundancy
is \emph{negligible}, taking from 0.026sec to 0.4sec per tree, with an 
average time of 0.06sec.
ITI runs much faster that IAI and takes from 0.1sec to 2sec with
0.1sec on average; the elimination of explanation redundancy is
slightly more time consuming than for IAI, taking from
0.025sec to 5.4sec with 0.29sec on average.
This slowdown results from DTs learned with ITI being deeper on
average, and features being tested multiple times.

The summary of results is detailed in Table~\ref{tab:res}.
For each dataset, the table reports the number of features and also
instances 
as \emph{\#F} and \emph{\#S}, respectively.
Thereafter, it shows tree statistics for IAI and ITI, namely, tree
depth \emph{D}, number of nodes \emph{\#N}, test accuracy \emph{\%A}
and number of paths \emph{\#P}.
The percentage of explanation-redundant paths is given as \emph{\%R}
while the percentage of data instances (measured for the \emph{entire}
feature space) covered by redundant paths is \emph{\%C}.
Focusing solely on the explanation-redundant paths, a single
PI-explanation is extracted and the average (min.~or max., resp.)
percentage of redundant literals per path is denoted by \emph{\%avg}
(\emph{\%m} and \emph{\%M}, resp.).
Observe that despite the shallowness of the trees produced by IAI and
ITI, for the majority of datasets and with a few exceptions, the paths
in trees trained by both tools exhibit significant
explanation-redundancy.
In particular, on average, 32.1\% (46.9\%, resp.) of paths are
explanation-redundant for the trees obtained by IAI (ITI, resp.).
For some DTs, obtained with either IAI and ITI, more than 85\% of tree
paths are redundant.
Also, redundant paths of the trees of IAI (ITI, resp.)
cover on average 20.1\% (37.7\%, resp.) of feature space.
Moreover, in some cases, up to 89\% and 98\% of the entire feature
space is covered by the redundant paths for IAI and ITI,
respectively.
This means that DTs produced by IAI and ITI are unable to provide a
user with a succinct explanation for the \emph{vast majority} of data
instances.
In addition, the average number of redundant literals in
redundant paths for both IAI and ITI varies from 16\% to
65\%, but for some DTs it exceeds 80\%.

To summarize, the numbers shown for the selected datasets and for the
state-of-the-art DT training tools contrast the common belief in the
inherent interpretability of decision tree classifiers.
Perhaps as importantly, the performance figures confirm that
the elimination of explanation-redundancy in the DTs produced with
available tools has negligible computational cost.

\section{Conclusions} \label{sec:conc}

Decision trees are most often associated with interpretability.
This paper shows that in some situations, paths in a decision tree may
include many literals that are irrelevant for an
explanation, and that this holds true even for irreducible decision
trees. 
Moreover, the paper proposes a linear time test to decide whether a
decision tree path contains irrelevant literals, and uses such test to
devise a polynomial time algorithm for computing one PI-explanation of
a decision tree.
Furthermore, the paper shows the connection between enumerating the
PI-explanations of DTs and the enumeration of minimal hitting sets.
Experimental results obtained on publicly available datasets, using
state-of-the-art decision tree learners, show that in practice induced 
paths in decision trees may contain irrelevant literals, even when the
decision tree is irreducible. For the decision trees considered in the
experiments, the run times of the proposed algorithms are either
negligible or comparable to tree learning times.

\bibliographystyle{abbrv}
\bibliography{refs}

\appendix

\section{Case Studies}

\subsection{Analysis of DT from Russel\&Norvig's Book}

\subsubsection{Decision Tree}
This case study considers the decision tree (DT) shown in
Figure~\ref{fig:dt-RN}, taken from
\cite[Ch.~18,page~702]{russell-bk10}.  The example consists in
deciding whether to wait for a table at a restaurant. Six features are
used in the DT, namely:
\begin{itemize}
\item Alternate: whether there is a suitable alternative restaurant nearby.
\item Bar: whether the restaurant has a comfortable bar area to wait in.
\item Fri/Sat: true on Fridays and Saturdays.
\item Hungry: whether the people are hungry.
\item Patrons: how many people are in the restaurant (values are None, Some, and Full).
\item Type: the kind of restaurant (French, Italian, Thai, or burger).
\end{itemize}

\begin{figure}[h]
  \centering \scalebox{0.95}{\forestset{
  BDT/.style={
    for tree={
      l=1.125cm,s sep=1.0cm,
      if n children=0{}{circle},
      draw,
      edge={
        my edge
      },
      if n=1{
        edge+={0 my edge},
      }{},
    }
  },
}
\begin{forest}
  BDT
  [\scriptsize{Patrons}
    [{\footnotesize\color{darkred}No}, edge label={node[near start,left,xshift=-1pt] {{\tiny{None}}}}] 
    [\scriptsize{Hungry}, edge label={node[near start,left,xshift=0.1pt] {{\tiny{Full}}}}  
    [{\footnotesize\color{darkred}No}, edge label={node[near start,left,xshift=-1pt] {{\tiny{No}}}}]
     [\scriptsize{Type}, edge label={node[near start,right,xshift=-1pt] {{\tiny{Yes}}}}
      [{\footnotesize\color{darkgreen}Yes}, edge label={node[near start,left,xshift=-1pt] {{\tiny{French}}}}]
       [{\footnotesize\color{darkred}No}, edge label={node[near end,right,xshift=-1pt] {{\tiny{Italian}}}}]
       [\scriptsize{Fri/Sat}, edge label={node[near start,right,xshift=-1pt] {{\tiny{Thai}}}}
            [{\footnotesize\color{darkred}No}, edge label={node[near start,left,xshift=-1pt] {{\tiny{No}}}}]
            [{\footnotesize\color{darkgreen}Yes}, edge label={node[near start,right,xshift=-1pt] {{\tiny{Yes}}}}]       
       ]
       [{\footnotesize\color{darkgreen}Yes}, edge label={node[near end,right,xshift=-1pt] {{\tiny{Burger}}}}]       
     ]  
    ]
    [{\footnotesize\color{darkgreen}Yes}, edge label={node[near start,right,xshift=-1pt] {{\tiny{Some}}}}]
  ]
\end{forest}

 }
  \caption{Example of decision tree  from Russel\&Norvig's Book.}
  \label{fig:dt-RN}
\end{figure}
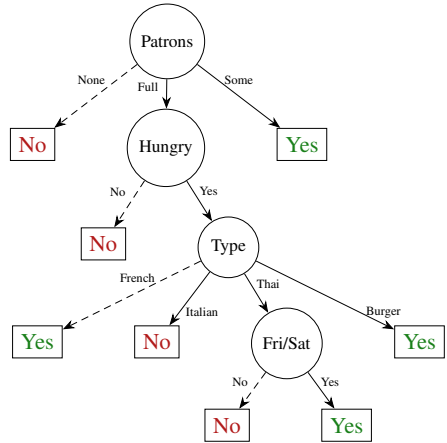

\subsubsection{Redundancy Analysis Results}
Analysis of the paths in the DT shown in Figure~\ref{fig:dt-RN}
yields the following results.
\begin{itemize}
\item path (\textit{Patrons}=\textit{None}) is explanation-irredundant.
\item path (\textit{Patrons}=\textit{Ful} and
  \textit{Hungry}=\textit{No}) is explanation-irredundant. 
\item path (\textit{Patrons}=\textit{Ful}  and
  \textit{Hungry}=\textit{Yes} and  \textit{Type}=\textit{Italian}) is
  explanation-redundant. If the values of \textit{Patrons} and
  \textit{Type} are fixed, then the value of \textit{Hungry} is
  irrelevant for the prediction. 
\item path (\textit{Patrons}=\textit{Ful} and
  \textit{Hungry}=\textit{Yes} and \textit{Type}=\textit{Thai} and
  \textit{Fri/Sat}=\textit{No}) is explanation-redundant.  If the
  values of \textit{Patrons}, \textit{Type} and \textit{Fri/Sat}  are
  fixed, then the value of \textit{Hungry} is irrelevant for the
  prediction.
\item path (\textit{Patrons}=\textit{Some}) is explanation-irredundant.
\item path (\textit{Patrons}=\textit{Ful}  and
  \textit{Hungry}=\textit{Yes}  and \textit{Type}=\textit{Frencg}) is
  explanation-irredundant. 
\item path (\textit{Patrons}=\textit{Ful} and
  \textit{Hungry}=\textit{Yes} and \textit{Type}=\textit{Thai} and
  \textit{Fri/Sat}=\textit{Yes}) is explanation-irredundant.
\item path (\textit{Patrons}=\textit{Ful} and
  \textit{Hungry}=\textit{Yes} and \textit{Type}=\textit{Burger}) is
  explanation-irredundant. 
\end{itemize}

As result,  2 out of 8 paths  exhibit explanation-redundancy. Thus, we
conclude that the DT exhibits $25$\% of explanation-redundancy. 

\subsection{Analysis of DT from Poole\&Mackworth's Book}

\subsubsection{Decision Tree}
This case study considers the decision tree shown in
Figure~\ref{fig:dt-PM}, taken from \cite[Ch.~07,page~298]{poole-bk17}.
The example consists in predicting whether a person reads an article
posted to a bulletin board given properties of the article. There are
three features in the decision tree:
\begin{itemize}
\item Author: whether the author is known or not.
\item Thread:  whether the article started a new thread or was a follow-up.
\item Length: the length of the article (short or long).
\end{itemize}

\begin{figure}[h]
  \centering \scalebox{0.95}{\forestset{
  BDT/.style={
    for tree={
      l=1.125cm,s sep=1.0cm,
      if n children=0{}{circle},
      draw,
      edge={
        my edge
      },
      if n=1{
        edge+={0 my edge},
      }{},
    }
  },
}
\begin{forest}
  BDT
  [\scriptsize{Length}
    [{\footnotesize\color{darkred}Skips}, edge label={node[near start,left,xshift=-1pt] {{\tiny{Long}}}}] 
    [\scriptsize{Thread}, edge label={node[near start,right,xshift=-1pt] {{\tiny{Short}}}} 
        [{\footnotesize\color{darkgreen}Reads}, edge label={node[near start,left,xshift=-1pt] {{\tiny{New}}}}]
        [\scriptsize{Author}, edge label={node[near start,right,xshift=-1pt] {{\tiny{Follow-up}}}} 
         [{\footnotesize\color{darkred}Skips}, edge label={node[near start,left,xshift=-1pt] {{\tiny{Unknown}}}}] 
         [{\footnotesize\color{darkgreen}Reads}, edge label={node[near start,right,xshift=-1pt] {{\tiny{Known}}}}]
        ]
    ]
  ]
\end{forest}

 }
  \caption{Example of decision tree  from Poole\&Mackworth's Book.}
  \label{fig:dt-PM}
\end{figure}
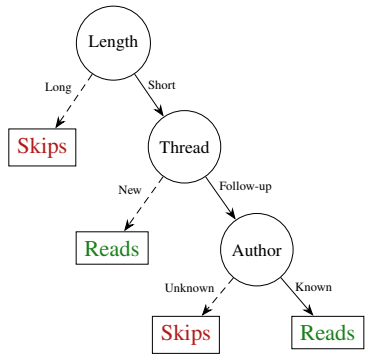

\subsubsection{Redundancy Analysis Results}
Analysis of the paths in the DT shown in Figure~\ref{fig:dt-PM}
yields the following results.
\begin{itemize}
\item path (\textit{Length}=\textit{long}) is explanation-irredundant.
\item path (\textit{Length}=\textit{short} and
  \textit{Thread}=\textit{follow-up} and
  \textit{Author}=\textit{unknown}) is explanation-redundant. If
  values of \textit{Thread} and \textit{Author} are fixed, then the
  value of \textit{Length} is irrelevant for the prediction. 
\item path (\textit{Length}=\textit{short} and
  \textit{Thread}=\textit{new}) is explanation-irredundant.
\item path (\textit{Length}=\textit{short} and
  \textit{Thread}=\textit{follow-up} and
  \textit{Author}=\textit{known}) is explanation-redundant. If the
  values of \textit{Length} and  \textit{Author} are  fixed, then the
  value of \textit{Thread} is irrelevant for the prediction. 
\end{itemize}

Accordingly,  2 out of 4 paths exhibit
explanation-redundancy. Therefore, we say that the DT  exhibit  $50$\%
of explanation-redundancy.


\subsection{Analysis of DT from Z.-H. Zhou's book}

\subsubsection{Decision Tree}
This case study considers the decision tree shown  in
Figure~\ref{fig:dt-Z}, taken from~\cite[Ch.~01,page~5]{zhou-bk12}.
The example consists in  predicting the type of a drawing, to be
chosen among the classes \emph{cross} and \emph{circle}. Two features
are used, namely:
\begin{itemize}[nosep]
\item $x>0.64\in\{\textsf{Y},\textsf{N}\}$.
\item $y>0.73\in\{\textsf{Y},\textsf{N}\}$.
\end{itemize}

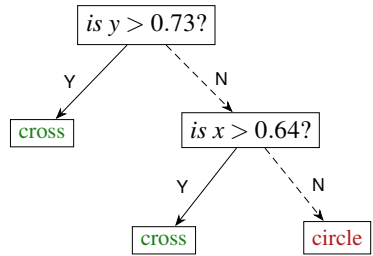
\begin{figure}[h]
  \centering \scalebox{0.95}{\forestset{
  BDT/.style={
    for tree={
      l=1.5cm,s sep=1.5cm,
      draw,
      edge={
        my edge
      },
      if n=2 {
        edge+={0 my edge},
      }{},
    }
  },
}
\begin{forest}
  BDT
  [\textit{is $y>0.73$}?
    [{\footnotesize\color{darkgreen}cross}, edge label={node[midway,left,xshift=-2.75pt] {{\scriptsize\textsf{Y}}}}]
    [\textit{is $x>0.64$}?, edge label={node[midway,right,xshift=2.75pt] {{\scriptsize\textsf{N}}}}
     [{\footnotesize\color{darkgreen}cross}, edge label={node[midway,left,xshift=-3.5pt] {{\scriptsize\textsf{Y}}}}]
     [{\footnotesize\color{darkred}circle}, edge label={node[midway,right,xshift=3.5pt] {{\scriptsize\textsf{N}}}}]
    ]
  ]
\end{forest}}
  \caption{Example of decision tree from Zhou's book~\cite{zhou-bk12}.}
  \label{fig:dt-Z}
\end{figure}

\subsubsection{Redundancy Analysis Results}
Analysis of the paths in the DT shown in Figure~\ref{fig:dt-Z} yields
the following results.
\begin{itemize}[itemsep=0pt,topsep=0pt,partopsep=0pt]
\item path ($y>0.73$) is explanation-irredundant.
\item path ($y\le0.73$ and $x>0.64$) is explanation-redundant. If the
  value of $y$ is fixed, then the value of $x$ is irrelevant for the
  prediction.
\item path ($y\le0.73$ and $x\le0.64$) is explanation-irredundant.
\end{itemize}

As a result, 1 out of 3 paths exhibit explanation-redundancy. Thus, we
say that the DT exhibits $33.33$\% of explanation-redundancy.

\subsection{Additional Examples}

It is interesting to note that the DTs used in a number of books and
surveys exhibit explanation-redundancy. A non-exhaustive list of
references
includes~\cite{moret-acmcs82,aha-ker97,rokach-tsmc05,rokach-bk07,russell-bk10,flach-bk12,kotsiantis-air13,alpaydin-bk16,bramer-bk16,solberg-nature16,poole-bk17}.

\section{Full Table of Results}

\autoref{tab:res2} presents the experimental results obtained on an
extended set of datasets.

\setlength{\tabcolsep}{5pt}
\rowcolors{2}{gray!10}{}
\begin{table*}[h]
\centering
\resizebox{\textwidth}{!}{
  \begin{tabular}{l>{\lpr}S[table-format=3.0,table-space-text-pre=\lpr]S[table-format=5.0,table-space-text-post=\rpr]<{\rpr}cS[table-format=3.0]S[table-format=3.0]S[table-format=2.0]S[table-format=2.0]S[table-format=2.0]cccS[table-format=2.0]S[table-format=4.0]S[table-format=3.0]S[table-format=4.0]S[table-format=2.0]S[table-format=2.0]S[table-format=2.0]cS[table-format=2.0]}
\toprule[1.2pt]
\rowcolor{white}
\multirow{2}{*}{\bf Dataset} & \multicolumn{2}{c}{\multirow{2}{*}{\bf (\#F\hspace{0.5cm} \#S)}} & \multicolumn{9}{c}{\bf IAI} & \multicolumn{9}{c}{\bf ITI} \\
  \cmidrule[0.8pt](lr{.75em}){4-12}
  \cmidrule[0.8pt](lr{.75em}){13-21}
\rowcolor{white}
& \multicolumn{2}{c}{} & {\bf D} & {\bf \#N} & {\bf \%A} & {\bf \#P} & {\bf \%R} & {\bf \%C} & {\bf \%m} & {\bf \%M} & {\bf \%avg} & {\bf D} & {\bf \#N} & {\bf \%A} & {\bf \#P} & {\bf \%R} & {\bf \%C} & {\bf \%m} & {\bf \%M} & {\bf \%avg} \\
\toprule[1.2pt]

IndiansDiabetes & 8 & 768 & 6 & 33 & 67 & 17 & 35 & 2 & 20 & 25 & 21 & 21 & 67 &  60 & 34 & 38 & 22 & 20 & 33 & 26  \\ 

adult\_data & 12 & 6061 & 6 & 83 & 78 & 42 & 33 & 25 & 20 & 40 & 25 & 17 & 509 &  73 & 255 & 75 & 91 & 10 & 66 & 22  \\ 

anneal & 38 & 886 & 6 & 29 & 99 & 15 & 26 & 16 & 16 & 33 & 21 & 9 & 31 &  100 & 16 & 25 & 4 & 12 & 20 & 16  \\ 

appendicitis & 7 & 106 & 2 & 7 & 68 & 4 & 0 & 0 & \textemdash & \textemdash & \textemdash & 3 & 7 &  81 & 4 & 50 & 3 & 50 & 66 & 58  \\ 

australian & 14 & 690 & 6 & 45 & 61 & 23 & 17 & 8 & 20 & 33 & 23 & 7 & 33 &  82 & 17 & 35 & 51 & 16 & 33 & 26  \\ 

auto & 25 & 202 & 6 & 33 & 53 & 17 & 23 & 1 & 16 & 33 & 23 & 10 & 47 &  75 & 24 & 33 & 53 & 14 & 40 & 26  \\ 

backache & 32 & 180 & 4 & 17 & 72 & 9 & 33 & 39 & 25 & 33 & 30 & 3 & 9 &  91 & 5 & 80 & 87 & 50 & 66 & 54  \\ 

balance & 4 & 625 & 6 & 93 & 81 & 47 & 61 & 41 & 25 & 50 & 27 & 12 & 105 &  81 & 53 & 50 & 62 & 25 & 50 & 26  \\ 

bank & 19 & 36293 & 6 & 113 & 88 & 57 & 5 & 12 & 16 & 20 & 18 & 19 & 1467 &  86 & 734 & 69 & 64 & 7 & 63 & 27  \\ 

banknote & 4 & 1348 & 3 & 9 & 67 & 5 & 20 & 0 & 66 & 66 & 66 & 35 & 71 &  54 & 36 & 83 & 2 & 33 & 50 & 43  \\ 

biodegradation & 41 & 1052 & 5 & 19 & 65 & 10 & 30 & 1 & 25 & 50 & 33 & 8 & 71 &  76 & 36 & 50 & 8 & 14 & 40 & 21  \\ 

biomed & 8 & 209 & 3 & 9 & 66 & 5 & 20 & 1 & 50 & 50 & 50 & 8 & 33 &  64 & 17 & 29 & 9 & 25 & 50 & 33  \\ 

breast-cancer & 9 & 272 & 6 & 61 & 74 & 31 & 41 & 13 & 16 & 33 & 20 & 7 & 25 &  70 & 13 & 23 & 33 & 25 & 33 & 30  \\ 

bupa & 6 & 341 & 6 & 47 & 62 & 24 & 29 & 5 & 20 & 33 & 25 & 24 & 49 &  59 & 25 & 64 & 25 & 16 & 50 & 25  \\ 

cancer & 9 & 449 & 6 & 37 & 87 & 19 & 36 & 9 & 20 & 25 & 21 & 5 & 21 &  84 & 11 & 54 & 10 & 25 & 50 & 37  \\ 

car & 6 & 1728 & 6 & 43 & 96 & 22 & 86 & 89 & 20 & 80 & 45 & 11 & 57 &  98 & 29 & 65 & 41 & 16 & 50 & 30  \\ 

cars & 8 & 392 & 2 & 5 & 100 & 3 & 0 & 0 & \textemdash & \textemdash & \textemdash & 14 & 45 &  87 & 23 & 26 & 1 & 20 & 40 & 26  \\ 

cleveland-nominal & 7 & 187 & 6 & 81 & 31 & 41 & 24 & 15 & 16 & 33 & 20 & 7 & 65 &  31 & 33 & 33 & 71 & 16 & 40 & 30  \\ 

cloud & 7 & 108 & 3 & 9 & 31 & 5 & 0 & 0 & \textemdash & \textemdash & \textemdash & 10 & 21 &  36 & 11 & 18 & 91 & 25 & 25 & 25  \\ 

colic & 22 & 357 & 6 & 55 & 81 & 28 & 46 & 6 & 16 & 33 & 20 & 4 & 17 &  80 & 9 & 33 & 27 & 25 & 25 & 25  \\ 

compas & 11 & 1155 & 6 & 77 & 34 & 39 & 17 & 8 & 16 & 20 & 17 & 15 & 183 &  37 & 92 & 66 & 43 & 12 & 60 & 27  \\ 

contraceptive & 9 & 1425 & 6 & 99 & 49 & 50 & 8 & 2 & 20 & 60 & 37 & 17 & 385 &  48 & 193 & 27 & 32 & 12 & 66 & 21  \\ 

corral & 6 & 64 & 6 & 19 & 92 & 10 & 80 & 50 & 20 & 66 & 42 & 4 & 13 &  100 & 7 & 71 & 50 & 33 & 50 & 40  \\ 

dermatology & 34 & 366 & 6 & 33 & 90 & 17 & 23 & 3 & 16 & 33 & 21 & 7 & 17 &  95 & 9 & 22 & 0 & 14 & 20 & 17  \\ 

divorce & 54 & 150 & 5 & 15 & 90 & 8 & 50 & 19 & 20 & 33 & 24 & 2 & 5 &  96 & 3 & 33 & 16 & 50 & 50 & 50  \\ 

ecoli & 7 & 327 & 6 & 45 & 75 & 23 & 4 & 5 & 20 & 20 & 20 & 53 & 109 &  59 & 55 & 25 & 19 & 20 & 66 & 36  \\ 

german\_data & 21 & 1000 & 6 & 25 & 61 & 13 & 38 & 10 & 20 & 40 & 29 & 10 & 99 &  72 & 50 & 46 & 13 & 12 & 40 & 22  \\ 

glass & 9 & 204 & 6 & 35 & 53 & 18 & 0 & 0 & \textemdash & \textemdash & \textemdash & 27 & 65 &  48 & 33 & 12 & 3 & 25 & 33 & 27  \\ 

glass2 & 9 & 162 & 3 & 11 & 66 & 6 & 0 & 0 & \textemdash & \textemdash & \textemdash & 7 & 15 &  51 & 8 & 75 & 96 & 20 & 60 & 41  \\ 

haberman & 3 & 289 & 6 & 55 & 58 & 28 & 14 & 4 & 33 & 33 & 33 & 12 & 35 &  74 & 18 & 44 & 21 & 33 & 50 & 39  \\ 

hayes-roth & 4 & 93 & 6 & 23 & 78 & 12 & 25 & 32 & 50 & 66 & 61 & 6 & 17 &  78 & 9 & 22 & 32 & 50 & 66 & 58  \\ 

heart-c & 13 & 302 & 6 & 43 & 65 & 22 & 36 & 18 & 20 & 33 & 22 & 4 & 15 &  75 & 8 & 87 & 81 & 25 & 50 & 34  \\ 

heart-h & 13 & 293 & 6 & 37 & 59 & 19 & 31 & 4 & 20 & 40 & 24 & 8 & 25 &  77 & 13 & 61 & 60 & 20 & 50 & 32  \\ 

heart-statlog & 13 & 270 & 6 & 33 & 55 & 17 & 29 & 5 & 20 & 33 & 30 & 4 & 13 &  81 & 7 & 71 & 31 & 25 & 50 & 35  \\ 

hepatitis & 19 & 155 & 5 & 17 & 77 & 9 & 33 & 6 & 20 & 33 & 24 & 3 & 11 &  80 & 6 & 33 & 14 & 33 & 33 & 33  \\ 

house-votes-84 & 16 & 298 & 6 & 49 & 91 & 25 & 68 & 67 & 16 & 50 & 27 & 3 & 9 &  90 & 5 & 80 & 75 & 33 & 50 & 41  \\ 

hungarian & 13 & 293 & 6 & 33 & 69 & 17 & 29 & 2 & 25 & 40 & 31 & 4 & 19 &  77 & 10 & 40 & 30 & 33 & 33 & 33  \\ 

ionosphere & 34 & 350 & 4 & 9 & 70 & 5 & 60 & 3 & 33 & 50 & 38 & 5 & 17 &  80 & 9 & 33 & 0 & 33 & 60 & 47  \\ 

iris & 4 & 149 & 5 & 23 & 90 & 12 & 41 & 25 & 25 & 33 & 30 & 10 & 21 &  63 & 11 & 36 & 13 & 50 & 50 & 50  \\ 

irish & 5 & 470 & 4 & 13 & 97 & 7 & 71 & 54 & 33 & 50 & 36 & 3 & 7 &  100 & 4 & 0 & 0 & \textemdash & \textemdash & \textemdash  \\ 

kr-vs-kp & 36 & 3196 & 6 & 49 & 96 & 25 & 80 & 75 & 16 & 60 & 33 & 13 & 67 &  99 & 34 & 79 & 43 & 7 & 70 & 35  \\ 

lending\_data & 9 & 5082 & 6 & 45 & 73 & 23 & 73 & 80 & 16 & 50 & 25 & 14 & 507 &  65 & 254 & 69 & 80 & 12 & 75 & 25  \\ 

letter & 16 & 18668 & 6 & 127 & 58 & 64 & 1 & 0 & 20 & 20 & 20 & 46 & 4857 &  68 & 2429 & 6 & 7 & 6 & 25 & 9  \\ 

lupus & 3 & 87 & 2 & 7 & 44 & 4 & 0 & 0 & \textemdash & \textemdash & \textemdash & 1 & 3 &  61 & 2 & 100 & 100 & 100 & 100 & 100  \\ 

lymphography & 18 & 148 & 6 & 61 & 76 & 31 & 35 & 25 & 16 & 33 & 21 & 6 & 21 &  86 & 11 & 9 & 0 & 16 & 16 & 16  \\ 

messidor & 19 & 1146 & 3 & 7 & 50 & 4 & 50 & 1 & 50 & 66 & 58 & 22 & 107 &  52 & 54 & 88 & 99 & 10 & 57 & 28  \\ 

meteo & 4 & 14 & 5 & 13 & 33 & 7 & 42 & 33 & 33 & 50 & 38 & 1 & 3 &  33 & 2 & 0 & 0 & \textemdash & \textemdash & \textemdash  \\ 

molecular-biology\_promoters & 58 & 106 & 6 & 17 & 86 & 9 & 33 & 19 & 16 & 33 & 23 & 3 & 9 &  81 & 5 & 20 & 14 & 33 & 33 & 33  \\ 

monk1 & 6 & 124 & 4 & 17 & 100 & 9 & 66 & 41 & 25 & 50 & 36 & 5 & 13 &  92 & 7 & 57 & 41 & 20 & 40 & 29  \\ 

monk2 & 6 & 169 & 6 & 67 & 82 & 34 & 64 & 49 & 16 & 66 & 32 & 5 & 25 &  50 & 13 & 23 & 20 & 25 & 33 & 30  \\ 

monk3 & 6 & 122 & 6 & 35 & 80 & 18 & 61 & 36 & 20 & 60 & 37 & 2 & 5 &  92 & 3 & 33 & 16 & 50 & 50 & 50  \\ 

mortality & 118 & 13442 & 6 & 111 & 74 & 56 & 8 & 14 & 16 & 20 & 17 & 26 & 865 &  76 & 433 & 61 & 61 & 7 & 54 & 19  \\ 

mouse & 5 & 57 & 3 & 9 & 83 & 5 & 20 & 0 & 33 & 33 & 33 & 2 & 5 &  75 & 3 & 0 & 0 & \textemdash & \textemdash & \textemdash  \\ 

mushroom & 22 & 8124 & 6 & 39 & 100 & 20 & 80 & 44 & 16 & 33 & 24 & 5 & 23 &  100 & 12 & 50 & 31 & 20 & 40 & 25  \\ 

mux6 & 6 & 64 & 6 & 55 & 61 & 28 & 85 & 78 & 20 & 50 & 37 & 4 & 15 &  46 & 8 & 37 & 31 & 25 & 33 & 30  \\ 

new-thyroid & 5 & 215 & 3 & 11 & 95 & 6 & 33 & 4 & 33 & 33 & 33 & 14 & 29 &  79 & 15 & 26 & 5 & 20 & 50 & 30  \\ 

pendigits & 16 & 10992 & 6 & 121 & 88 & 61 & 0 & 0 & \textemdash & \textemdash & \textemdash & 38 & 937 &  85 & 469 & 25 & 86 & 6 & 25 & 11  \\ 

postoperative-patient-data & 8 & 78 & 6 & 43 & 50 & 22 & 59 & 44 & 16 & 50 & 25 & 6 & 15 &  56 & 8 & 75 & 25 & 16 & 50 & 36  \\ 

primary-tumor & 15 & 228 & 6 & 55 & 71 & 28 & 35 & 21 & 16 & 33 & 21 & 6 & 21 &  82 & 11 & 54 & 31 & 16 & 50 & 31  \\ 

promoters & 58 & 106 & 1 & 3 & 90 & 2 & 0 & 0 & \textemdash & \textemdash & \textemdash & 3 & 9 &  81 & 5 & 20 & 14 & 33 & 33 & 33  \\ 

recidivism\_data & 15 & 3998 & 6 & 105 & 61 & 53 & 28 & 22 & 16 & 33 & 18 & 15 & 611 &  51 & 306 & 53 & 38 & 9 & 44 & 16  \\ 

schizo & 14 & 340 & 6 & 17 & 55 & 9 & 55 & 5 & 50 & 66 & 58 & 13 & 51 &  52 & 26 & 30 & 64 & 20 & 25 & 21  \\ 

segmentation & 19 & 210 & 4 & 15 & 38 & 8 & 0 & 0 & \textemdash & \textemdash & \textemdash & 27 & 57 &  23 & 29 & 48 & 2 & 11 & 71 & 39  \\ 

seismic\_bumps & 18 & 2578 & 6 & 37 & 89 & 19 & 42 & 19 & 20 & 33 & 24 & 8 & 39 &  93 & 20 & 60 & 79 & 20 & 60 & 42  \\ 

shuttle & 9 & 58000 & 6 & 63 & 99 & 32 & 28 & 7 & 20 & 33 & 23 & 23 & 159 &  99 & 80 & 33 & 9 & 14 & 50 & 30  \\ 

soybean & 35 & 623 & 6 & 63 & 88 & 32 & 9 & 5 & 25 & 25 & 25 & 16 & 71 &  89 & 36 & 22 & 1 & 9 & 12 & 10  \\ 

spambase & 57 & 4210 & 6 & 63 & 75 & 32 & 37 & 12 & 16 & 33 & 19 & 15 & 143 &  91 & 72 & 76 & 98 & 7 & 58 & 25  \\ 

spect & 22 & 228 & 6 & 45 & 82 & 23 & 60 & 51 & 20 & 50 & 35 & 6 & 15 &  86 & 8 & 87 & 98 & 50 & 83 & 65  \\ 

splice & 2 & 3178 & 3 & 7 & 50 & 4 & 0 & 0 & \textemdash & \textemdash & \textemdash & 88 & 177 &  55 & 89 & 0 & 0 & \textemdash & \textemdash & \textemdash  \\ 

student-mat & 32 & 395 & 6 & 109 & 35 & 55 & 9 & 3 & 20 & 25 & 21 & 22 & 177 &  25 & 89 & 6 & 6 & 11 & 20 & 15  \\ 

student-por & 32 & 649 & 6 & 119 & 30 & 60 & 1 & 0 & 20 & 20 & 20 & 22 & 259 &  26 & 130 & 9 & 5 & 7 & 20 & 10  \\ 

tae & 5 & 110 & 6 & 43 & 40 & 22 & 9 & 6 & 25 & 33 & 29 & 8 & 23 &  36 & 12 & 41 & 21 & 25 & 40 & 31  \\ 

titanic & 3 & 24 & 2 & 5 & 40 & 3 & 33 & 25 & 50 & 50 & 50 & 2 & 5 &  20 & 3 & 0 & 0 & \textemdash & \textemdash & \textemdash  \\ 

tram\_2000\_side\_16x16 & 256 & 2000 & 1 & 3 & 100 & 2 & 0 & 0 & \textemdash & \textemdash & \textemdash & 1 & 3 &  100 & 2 & 0 & 0 & \textemdash & \textemdash & \textemdash  \\ 

uci\_mammo\_data & 13 & 126 & 6 & 53 & 11 & 27 & 51 & 43 & 16 & 40 & 24 & 9 & 23 &  38 & 12 & 66 & 25 & 25 & 55 & 39  \\ 

vehicle & 18 & 846 & 6 & 79 & 49 & 40 & 10 & 0 & 16 & 20 & 19 & 24 & 141 &  58 & 71 & 45 & 54 & 9 & 41 & 23  \\ 

wdbc & 30 & 569 & 2 & 7 & 87 & 4 & 0 & 0 & \textemdash & \textemdash & \textemdash & 57 & 115 &  61 & 58 & 94 & 10 & 14 & 73 & 49  \\ 

wpbc & 33 & 198 & 2 & 5 & 57 & 3 & 33 & 0 & 50 & 50 & 50 & 2 & 5 &  72 & 3 & 100 & 100 & 100 & 100 & 100  \\ 

yeast & 9 & 1462 & 6 & 45 & 49 & 23 & 4 & 0 & 25 & 25 & 25 & 64 & 493 &  37 & 247 & 14 & 0 & 11 & 25 & 16  \\ 

zoo & 16 & 59 & 6 & 23 & 91 & 12 & 33 & 7 & 16 & 33 & 21 & 6 & 13 &  83 & 7 & 0 & 0 & \textemdash & \textemdash & \textemdash  \\ 

\bottomrule[1.2pt]
\end{tabular}
}
\caption{Explanation-redundancy  in decision trees obtained with IAI and ITI. \label{tab:res2}}
\end{table*}

\end{document}